\newtheorem{assumption}{Assumption}
\newtheorem{lemma}{\textbf{Lemma}}
\newtheorem{theorem}{Theorem}
\newcommand{\mathcolorbox}[2]{\colorbox{#1}{$\displaystyle #2$}}
\newcommand{\colorbibs}[2][orange]%
{%
\DeclareBibliographyCategory{ColoredBiblist#1}%
\addtocategory{ColoredBiblist#1}{#2}%
\AtEveryBibitem{\ifcategory{ColoredBiblist#1}{\color{orange}\bfseries}{}}
}
\newcommand\highlightReference[1]{%
  \expandafter\newcommand\csname highlightReference-#1\endcsname{}%
}
\let\oldbibitem\bibitem
\def\bibitem#1 #2\par{%
  \expandafter\ifx\csname highlightReference-#1\endcsname\relax
    \oldbibitem{#1}#2\par
  \else
    \oldbibitem{#1}\highlight{#2}\par
  \fi
}
\newcommand\highlight[1]{{#1}}
\begin{document}


\title{Prescribed Performance Control of  Deformable \\Object Manipulation in Spatial Latent Space}

\author{Ning Han~\orcidlink{0009-0007-4377-1461}, Gu Gong~\orcidlink{0000-0002-2025-5444},~\IEEEmembership{Student Member,~IEEE,} Bin Zhang~\orcidlink{0000-0001-6860-1951},~\IEEEmembership{Student Member,~IEEE,} Yuexuan Xu~\orcidlink{0000-0001-8792-537X},\\ Bohan Yang~\orcidlink{0000-0002-5305-7591}, Yunhui Liu~\orcidlink{0000-0002-3625-6679},~\IEEEmembership{Fellow,~IEEE,} and David Navarro-Alarcon~\orcidlink{0000-0002-3426-6638},~\IEEEmembership{Senior Member,~IEEE}

\thanks{This work is supported by the Research Grants Council of Hong Kong under grant AoE/E-407/24-N. \textit{Corresponding author: David Navarro-Alarcon.}}
\thanks{N. Han, G. Gong, B. Zhang, Y. Xu and D. Navarro-Alarcon are with the Department of Mechanical Engineering, The Hong Kong Polytechnic University, Kowloon, Hong Kong (e-mail: ningg.han@connect.polyu.hk; 22041178r@connect.polyu.hk; me-bin.zhang@connect.polyu.hk; yuexuan.xu@connect.polyu.hk;
dnavar@polyu.edu.hk)}
\thanks{B. Yang and Y. Liu are with the T Stone Robotics Institute, Department of Mechanical and Automation Eng., The Chinese University of Hong Kong, NT, Hong Kong (e-mail: bhyang@link.cuhk.edu.hk; yhliu@cuhk.edu.hk).}
}

\markboth{Han \MakeLowercase{\textit{et al.}}: Prescribed Performance Control of Deformable Object Manipulation in Spatial Latent Space}{}

\maketitle
\begin{abstract}
Manipulating three-dimensional (3D) deformable objects presents significant challenges for robotic systems due to their infinite-dimensional state space and complex deformable dynamics. This paper proposes a novel model-free approach for shape control with constraints imposed on key points. Unlike existing methods that rely on feature dimensionality reduction, the proposed controller leverages the coordinates of key points as the feature vector, which are extracted from the deformable object's point cloud using deep learning methods. This approach not only reduces the dimensionality of the feature space but also retains the spatial information of the object. By extracting key points, the manipulation of deformable objects is simplified into a visual servoing problem, where the shape dynamics are described using a deformation Jacobian matrix. To enhance control accuracy, a prescribed performance control method is developed by integrating barrier Lyapunov functions (BLF) to enforce constraints on the key points. The stability of the closed-loop system is rigorously analyzed and verified using the Lyapunov method. Experimental results further demonstrate the effectiveness and robustness of the proposed method.
\end{abstract}

\begin{IEEEkeywords}
Latent space, adaptive control, prescribed performance control, barrier Lyapunov function.
\end{IEEEkeywords}

\section{Introduction}

\IEEEPARstart{D}{eformable} object manipulation (DOM) of robots has emerged as a significant area of research due to its broad range of applications including medical surgery, industrial welding, and automated cloth folding. The dexterity of robotic manipulation is crucial for effectively managing deformable materials, enabling robots to contribute significantly across various sectors. Nonetheless, existing robotic control systems exhibit substantial limitations that hinder their capacity to achieve the advanced functionalities required for such tasks. Consequently, the development and investigation of robust and efficient control methodologies are imperative to bridge this gap and enhance the performance of robotic systems in DOM, thereby enabling their wider applicability and integration into complex real-world scenarios.

\begin{figure}[!t]\centering
	\includegraphics[width=\linewidth]{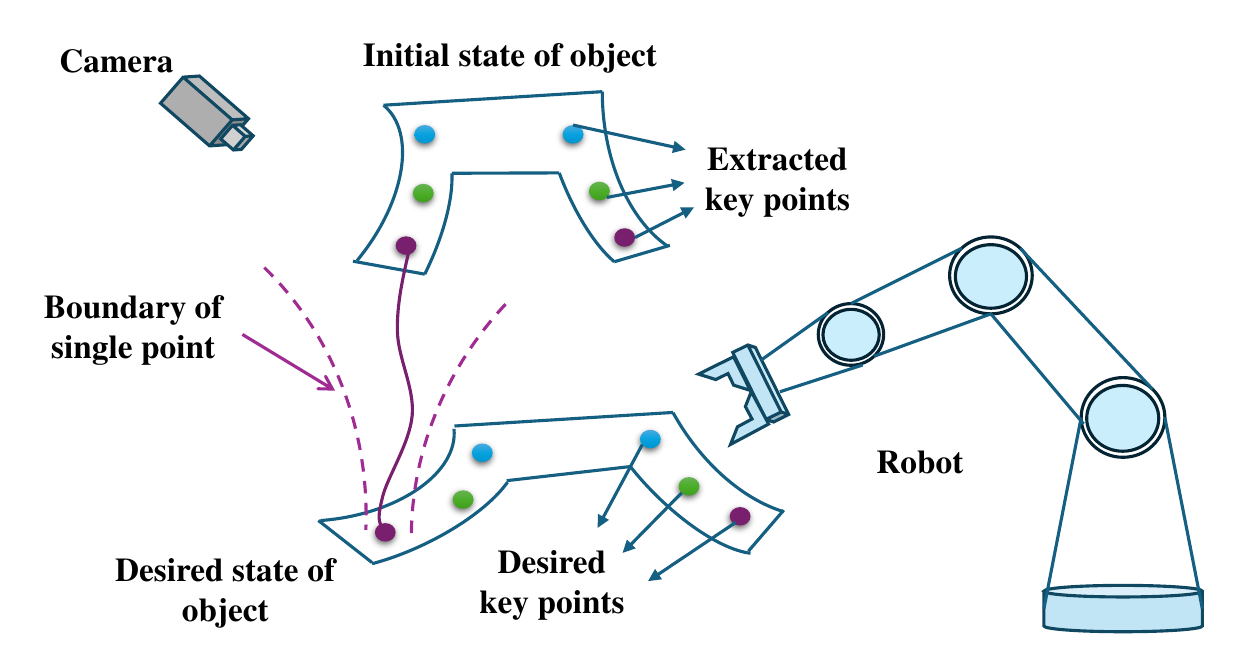}
	\caption{The configuration of 3D deformable object manipulation with constraints of key points.}
\label{domp}
\end{figure}

\subsection{Related Work}

DOM is currently a very {active} research topic in the robotic manipulation community. 
Existing methods for DOM can be primarily categorized into two approaches: model-free and model-based methods. Model-based DOM mainly relies on physical models to predict the deformation of deformable objects, such as the mass-spring model \cite{9729408}, finite element analysis method \cite{8593512}, etc. {Model-based methods are highly dependent on the accuracy of models and the ability to estimate its parameters accurately. However, due to difficulties in obtaining accurate parameters, recent works combine model-based approaches with some data-driven algorithms for sim-to-real control to reduce the reliance on model accuracy \cite{ccc}.}

Model-free DOM methods mainly include traditional data-driven methods and learning-based methods. Traditional data-driven methods use iterative learning methods, such as adaptive learning \cite{9888782,10122176} and Kalman filter \cite{1013418}, to approximate the deformation Jacobian matrix (DJM) of the object, which is then used to generate control signals through visual servoing control methods \cite{9634846}. {Learning-based methods mainly include model predictive control (MPC) methods and  reinforcement learning (RL) methods. MPC methods mainly utilize deep neural networks \cite{8769898}, graph neural networks \cite{10758319}, and Gaussian process regression \cite{8258951} to predict the deformation of objects, and then the learned object deformation model  will perform as a constraint for MPC in DOM control \cite{10378708}. RL methods gradually guide the robot to manipulate deformable objects to the target shape by setting the target reward and process reward \cite{cbsystems.0114, zhao2023learningfinegrainedbimanualmanipulation}. Recent work utilizes Vision-Language-Action to understand DOM through multimodal representation and perform end-to-end control \cite{black2024pi0visionlanguageactionflowmodel}.}

Nevertheless, both model-based and model-free approaches encounter significant challenges in the control of deformable objects due to their inherent high-dimensional state space. To address this challenge, the state of the deformable object is typically projected into a low-dimensional latent space. Especially, commonly employed techniques for this dimensionality reduction include Principal Component Analysis (PCA) \cite{ZHU2021103798}, B-spline \cite{10120758}, and autoencoders \cite{9928322}. These methods can effectively reduce the feature dimensionality and facilitate the design of the controller. However, the feature vectors obtained through these methods belong to an abstract latent space, lacking physical and spatial information. This limitation hinders the ability to utilize such information, which is critical for achieving precision in real-time control. {To solve this problem, several methods are proposed to avoid latent abstractions entirely and keep spatial and geometric information, such as Fourier series \cite{8106734}, Procrustes analysis \cite{10311090}, and Cosserat Model \cite{10654562}. However, these methods are limited by the two-dimensional structure, rigid assumptions, and strong model dependence, respectively, and cannot be widely used in various scenarios.}

As the capabilities and real-time performance of deep learning networks continue to improve, recent research has increasingly leveraged convolutional neural networks (CNNs) \cite{10.5555/3454287.3455249} and PointNet \cite{8099499} to efficiently compress the features of 2D images and 3D point clouds of objects. {Compared with traditional methodologies such as the Fourier series, unsupervised learning-based approaches possess the capability to directly extract key points from the surfaces of deformable objects using images or point clouds, without relying on prior knowledge. These sets of feature points effectively represent a latent space that encodes critical spatial information.} This capability retains the spatial information inherent in these features, ensuring that critical geometric details are preserved during the dimension reduction \cite{9811597}. The advantage of utilizing key points as the feature vector is that it facilitates the establishment of constraints on key point positions when designing controllers, enabling precise control of deformable objects based on these identified key points.

Jacobian-based prescribed performance control (PPC) is a widely used method for visual servoing control with constraints \cite{8720242,6942538}. This method confines visual feature errors within preset boundaries through the design of error boundaries and transferred errors \cite{10313035}, thereby enhancing both transient and steady-state control performance. {However, these methods are only applicable to visual servoing control where the Jacobian matrix is known. How to transfer these methods to DOM tasks where the Jacobian matrix cannot be obtained, so as to improve the control accuracy of DOM, is an important research topic.}

\subsection{Our Contribution}
{Inspired by \cite{9811597,9888782,10313035}}, this paper introduces a prescribed performance control strategy for DOM in latent space while incorporating spatial information. Specifically, a deep learning architecture termed Key-Grid \cite{5djsio} is employed to extract key points from the point cloud representation of deformable objects, with the 3D coordinates of these key points serving as feature descriptors. Subsequently, a Jacobian-based prescribed performance controller is developed, integrating a prescribed performance function to ensure that the errors of key points converge within the predefined performance bounds, while the DJM is approximated using a radial basis function neural network (RBFNN). {Compared with \cite{9811597}, which utilizes manually marked key points and designs an optimization controller based on an adaptive Jacobian matrix, and \cite{9888782}, which extracts key points from depth images and designs a graph network-based MPC controller, our method offers a distinct approach. Specifically, we extract key points directly from 3D point clouds and integrate them with an improved version of the PPC framework proposed in \cite{10313035}. We successfully migrated this type of PPC controller from the visual servoing tasks based on accurate Jacobian matrices obtained by hand-eye calibration to the DOM tasks where the Jacobian matrix is completely unknown.} The original contributions of this paper are summarized as follows:
\begin{itemize}
\item[$\bullet$] We develop a morphological presentation of deformable objects which utilizes Key-Grid neural network to {embed} the state of the deformable object into the latent space with spatial information.
\item[$\bullet$] We proposed a motion controller that integrates prescribed performance functions to constrain the spatial errors of key points, effectively improving accuracy.
\item[$\bullet$] We construct a Barrier Lyapunov function and spatial error boundaries to ensure stability of the closed-loop system, thereby guaranteeing the boundedness of the errors of the key points.
\item[$\bullet$] We conduct detailed experiments with ablative and comparative studies to evaluate the performance of our proposed algorithm. The results demonstrate that our method outperforms other approaches.
\end{itemize}

The rest of this article is organized as follows: Sec. II formulates the DOM problem. Sec. III presents the key point extraction and the control method, along with proof of the Lyapunov stability. Experimental results are provided in Sec. IV. 
{Finally, Sec. V summarizes the advantages and limitations of this work, and provides direction for improvement.}

\section{Problem Statement}
In this paper, we investigate a 3D DOM problem using a dual-arm robotic system, where each arm has six degrees of freedom. Specifically, this work focuses on manipulating sponges to perform various tasks, as illustrated in Fig. \ref{domp}.

All of these tasks can be regarded as a sequence of shape control tasks. For each shape control task, the problem is formulated as: Consider a deformable object which is represented by a set of 3D points $\mathbf{p}_{raw} \in \mathbb{R}^{3N}$ {where $N$ is an extremely high integer ($N \gg 10^5$)}, extract key points $\mathbf{p} \in \mathbb{R}^{3n}$ from this high-dimension vector, where $n$ denotes the number of key points with $ n \ll N$. Then, given the desired shape $\mathbf{p}_{raw}^* \in \mathbb{R}^{3N}$ and key points $\mathbf{p}^* \in \mathbb{R}^{3n}$, control the joint speeds $\dot{\mathbf q} \in \mathbb{R}^{12}$ to make the visual errors $\mathbf{e}_p =\mathbf  p - \mathbf p^* \in \mathbb{R}^{3n}$ converge to zero while satisfying prescribed constraints. 

Before introducing the method proposed in this paper, the following assumptions are made.
\begin{assumption}
    The object is rigidly grasped by the robot so that there is no relative displacement between the object and the gripper of the robot arm.
\end{assumption}
\begin{assumption}\label{slow}
    The robot manipulating motion is sufficiently slow such that we can utilize Jacobian matrix to formulate the dynamics of quasi-static elastic deformation of the object.
\end{assumption}

Moreover, to ensure clarity and maintain consistency throughout this paper, we adhere to the following notation conventions: scalar are represented by italicized lowercase letters (e.g., $a$), vectors are denoted by bold lowercase letters (e.g., $\mathbf{a}$), and matrices are indicated using bold uppercase letters (e.g., $\mathbf{A}$).

\section{Methodology}
{In this paper, we propose a DOM control method which utilizes a Key-Grid neural network to extract key points of deformable objects as features, combined with the PPC method to improve the steady-state and transient performance of the system. The overall structure of our proposed method is displayed in Fig. \ref{fig_str}.}

\begin{figure*}[]\centering
	\includegraphics[width=1\linewidth]{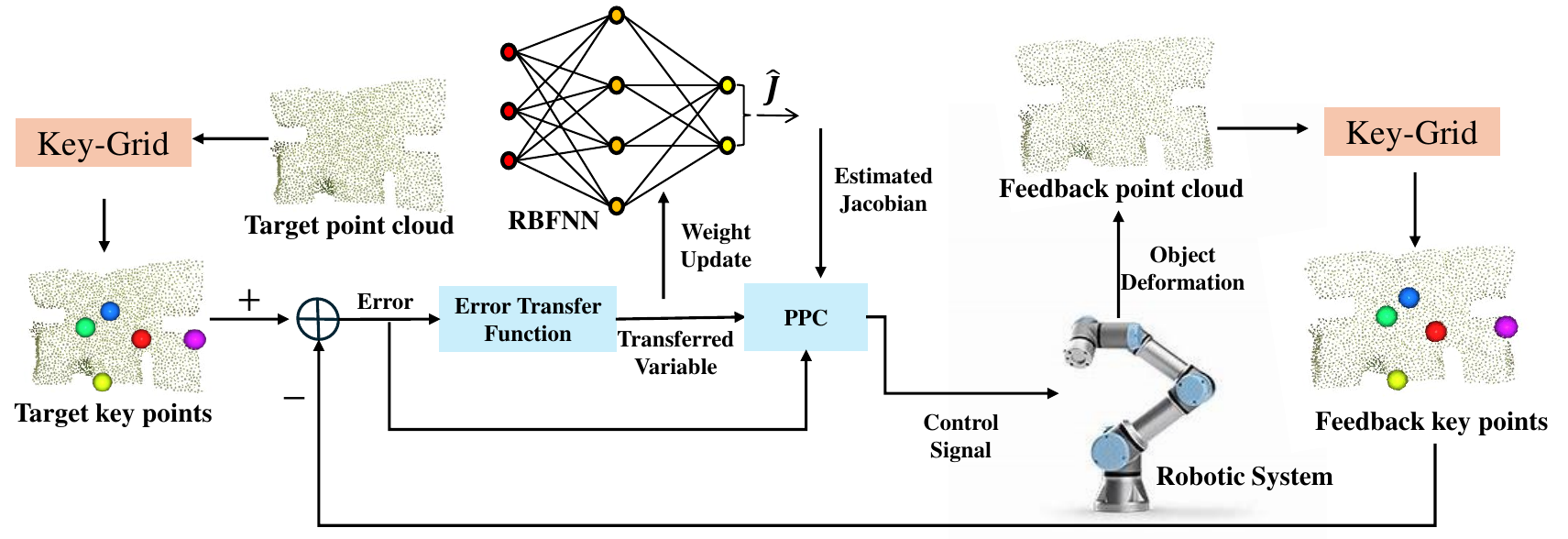}
	\caption{{Structure of our proposed method.}}
\label{fig_str}
\end{figure*}

\subsection{Key Points Extraction from Visual Observations}
The key idea of the proposed method is to extract some key points of the deformable objects as features. To avoid adding labels to deformable objects for training or control, we introduce an unsupervised learning method, Key-Grid whose structure is displayed in Fig. \ref{fig_2}, to extract key points. Precisely, a Key-Grid network consists of two components, encoder $f_{enc}$ and decoder $f_{dec}$. The encoder $f_{enc}$ consists of $L$ PointNet++ layers with a Softmax activation function applied to the final layer. Consequently, the coordinates of the predicted key points $\mathbf P \in \mathbb{R}^{n \times 3}$ can be extracted by the encoder.
\begin{figure*}[]\centering
	\includegraphics[width=1\linewidth]{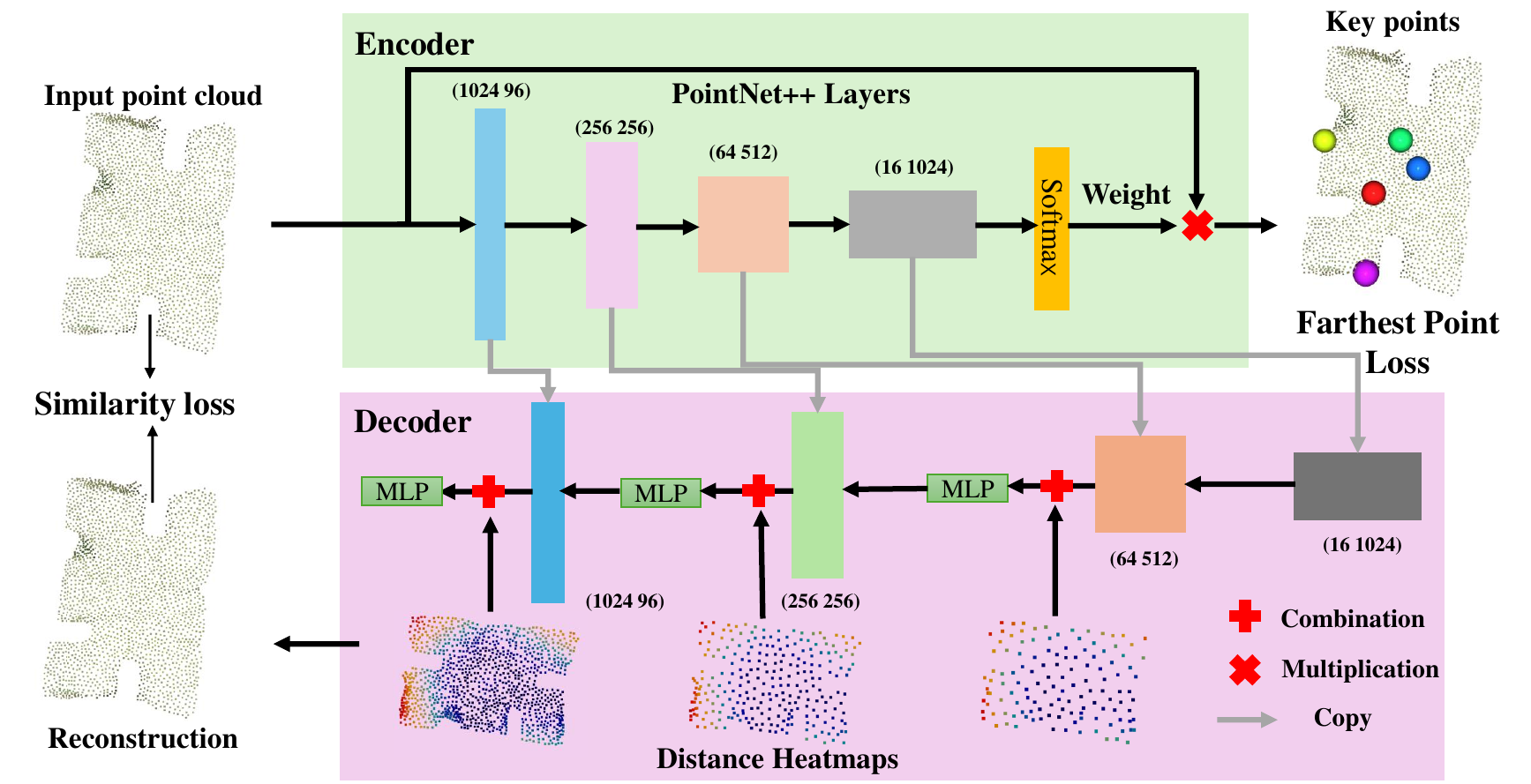}
	\caption{Structure of Key-Grid. In the encoder section, the key points are extracted from the input point cloud by utilizing the PointNet++. The detected key points are then used to form grid heatmaps. In the decoder section, each layer of the PointNet++, heatmaps, and MLPs are utilized to reconstruct the input. The farthest point loss of the key points and the reconstructed point cloud similarity loss are used for network training.}
\label{fig_2}
\end{figure*}
\begin{equation}
   \mathcolorbox{white}{ \mathbf P =  \mathbf {W}_{key} \cdot \mathbf {X}_{key}}
\end{equation}
where $ \mathcolorbox{white} {\mathbf X_{key} \in \mathbb{R}^{N \times 3 }}$ represents input point cloud and $\mathcolorbox{white}{ \mathbf W_{key} = \text{Softmax}(f_{enc}) \in \mathbb{R}^{n \times N}}$. Then, a skeleton is constructed by connecting each pair of predicted key points, and the feature of a point $d(p)$ can be defined as the maximum of the weighted distances from this point to the edges of the skeleton.
\begin{equation}
    d(p) = \mathop{\text{max}}\limits_{i,j}\left[ s_{ij}\ \text{e}^{(d_{ij}^2(p)/ \mathcolorbox{white}{\nu}^2\color{black})}  \right]
\end{equation}
where $d_{ij}(p)$ represents the distance between the point $p$ and the edge of the skeleton, which connects the predicted key points $\mathcolorbox{white}{ \mathbf k_i}$ and $\mathcolorbox{white}{\mathbf k_j}$, $s_{ij}$ refers to the learnable weight of this edge produced by the encoder and $\mathcolorbox{white}{\nu}$ is a hyper-parameter. {Thus, a grid heatmap of the point $\mathbf{h}(\mathbf{d}(p))$ can be generated, which provides a continuous geometric representation of the object's structure and significantly facilitates the handling of dramatic shape variations in deformable objects, where $\mathbf{d}(p) \in R^{N}$ denotes the vector of the distance between the points and the skeleton. Please refer to \cite{5djsio} for detailed information.} 

After extracting key points from the input point cloud, the decoder $f_{dec}$, which is composed of $L$ multilayer perception (MLP), utilizes these key points to reconstruct the input point cloud by gradually augmenting finer geometric details in a hierarchical manner. Specifically, the $(L-i)$-th layer of the decoder can be written as
\begin{equation}
    \begin{aligned}
        \mathbf F_{dec}^{(L-i +1)} &= \mathbf h( \mathbf X^i_{enc}) \oplus \mathbf F_{enc}^i \\
        &\oplus \text{Proj}(\mathbf F_{dec}^{(L-i)}, \mathbf X_{enc}^{(i-1)}, \mathbf X_{enc}^{(i)})
    \end{aligned}
\end{equation}
where  $\mathbf X_{enc}^i\in \mathbb{R}^{N \times 3}$, $\mathbf F_{enc}^i \in \mathbb{R}^{N \times F_i}$ denotes the output features of the $i$-th encoder where $F_i$ denotes the corresponding dimensions, and $\text{Proj}(\mathbf F_{dec}^{(L-i)},\mathbf X_{enc}^{(i-1)},\mathbf X_{enc}^{(i)})$ is the feature projected from the former layer of the decoder with $\oplus$ being element-wise concatenation. Consequently, the loss function can be defined as
\begin{equation}
\label{loss}
    \mathcal{L} = \mathcal{L}_{sim} + \mathcal{L}_{far}
\end{equation}
where $\mathcal{L}_{sim}$ denotes the Chamfer distance between the input point cloud and the reconstructed point cloud, and $\mathcal{L}_{far}$ represents the Chamfer distance between the extracted key points and the points obtained by the farthest point sampling method, which ensures that the key points are evenly distributed on the surface of the object.

\subsection{Adaptive RBFNN Jacobian Estimator}
Among online learning methods, adaptive RBFNN has been widely studied due to its strong fitting and generalization ability. An RBFNN can be described as
\begin{equation}
    \boldsymbol{\phi}(\mathbf x) =\mathcolorbox{white} {\mathbf  W_c^T} \color{black}{\boldsymbol{\theta}}(\mathbf x)
\end{equation}
where $\mathcolorbox{white} {\mathbf W_c}$ represents the weight matrix, $\mathbf x$ denotes the input of the network and $\bm{\theta}(\cdot) = [\theta_1(\cdot), \theta_2(\cdot), \cdots, \theta_m(\cdot)]^T \in \mathbb{R}^{m}$ is the radial basis function. In this paper, the Gaussian radial function is utilized as the basis function
\begin{equation}
    \theta_i(\mathbf x) = \text{e}^{\frac{-\Vert \mathbf x - \boldsymbol \mu \Vert^2}{\sigma_i^2}}
\end{equation}
where $\boldsymbol \mu$ and $\sigma_i$ are the centers and width of the basis function, respectively. In this case, the input of the network can be designed as $\mathcolorbox{white}{\mathbf x = [\mathbf q^T , \mathbf p^T]^T \in \mathbb{R}^{12+3n}} $ where $\mathbf q$ represents the joint position of robots, and then the centers $ \boldsymbol \mu$ can be obtained by machine learning methods such as K-means, and the width $\sigma_i$ can be designed manually. If we consider the 3D coordinates of the key points extracted by Key-Grid in the perception part,  then we have
\begin{equation}
    \mathbf p = [\mathbf p_1, \mathbf p_2, \cdots,\mathbf p_n]^T \in \mathbb{R}^{3n}
\end{equation}
where $\mathbf p_i = [x_i, y_i, z_i]^T$ denotes the coordinates of the $i$-th key points. Then the Jacobian matrix $\mathbf J$ can be described as
\begin{equation}
    \label{jac}
 \dot{\mathbf p}=\mathbf J\dot{\mathbf q}=\left[\begin{matrix} \mathcolorbox{white}{ \dot{\mathbf p}_1}\\\vdots\\ \mathcolorbox{white}{\dot{\mathbf p}_n}\\\end{matrix}\right]=\left[\begin{matrix} \mathbf j_{11}&\cdots&\mathbf  j_{112}\\\vdots&\mathbf j_{ij}&\vdots\\\mathbf j_{n1}&\cdots&\mathbf j_{n12}\\\end{matrix}\right]\left[\begin{matrix}\dot{q}_1\\\vdots\\\dot{q}_{12}\\\end{matrix}\right]
\end{equation}
where  $\mathbf j_{ij}\in \mathbb{R}^{3}$ denotes the Jacobian for the $i$-th key point and $j$-th joint. Then we utilize RBFNN to approximate  $\mathbf j_{ij}$
\begin{equation}
\label{rbfest}
     \widehat{\mathbf j}_{ij}=\mathbf W_{ij}^T\bm{\theta}\left(\mathbf x\right) 
\end{equation}
 and the estimation error $ \widetilde{\mathbf j}_{ij}$ can be written as
 \begin{equation}
    \widetilde{\mathbf j}_{ij} = \widetilde{ \mathbf W}^T_{ij}\bm{\theta}(\mathbf x) + \boldsymbol \epsilon_{ij}
\end{equation}
where $ \widetilde{\mathbf W}_{ij} = \mathbf W_{ij}^* - \mathbf W_{ij}$ denotes the estimation errors of weight matrix $\mathbf W_{ij}$ with $\boldsymbol \epsilon_{ij}$ being a small bounded estimation errors of the RBFNN.
\subsection{Prescribed Performance Control}
Since the coordinates of the feature points are utilized as feature vectors, it becomes feasible to design the controller based on the spatial information embedded within these features. Consequently, we introduce PPC to constrain the control errors within predefined boundaries. By integrating the PPC, the feature points will be constrained to move within the desired boundaries, facilitating rapid convergence towards the target configuration. This combination enables the control system to effectively manage the dynamics of the deformable object, ensuring that deviations from the optimal trajectories are minimized. To design a PPC,  positive decreasing continuous functions are first defined for every element $e_i$ in the  error vector $\mathbf e_p = [\mathbf e_{p1}, \cdots, \mathbf e_{pn}]^T = [e_1, e_2, \cdots e_{3n-1}, e_{3n}]^T$ as
\begin{equation}
    \mu_i\left(t\right)=\left(\mu_{i0}-\mu_{i\infty}\right)\text{e}^{-\alpha_it}+\mu_{i\infty}
\end{equation}
where $\mu_{i0}>\mu_{i\infty}>0 $ denote maximum allowable error, steady-state error and $ \alpha_i>0$ is the convergence speed. Based on the performance function, the error boundaries can be defined as $\varphi_{ai}=-\delta_i\mu_i\left(t\right)$, $\varphi_{bi}=\delta_i\mu_i\left(t\right)$ which represent lower and upper boundaries respectively, with $\delta_i \in \mathbb{R}^+$ denoting a positive parameter selected by designer.

Combined with the defined boundaries, the errors can be converted into transfer errors as
\begin{equation}
\label{xi}
    \xi_i=S_i\left(e_i\right)\xi_{bi}+\left(1-S_i\left(e_i\right)\right)\xi_{ai}
\end{equation}
where $\boldsymbol \xi_a=\left[\begin{matrix}\frac{e_1}{\varphi_{a1}}, \cdots ,\frac{e_{3n}}{\varphi_{a3n}}\\\end{matrix}\right]$, $\boldsymbol \xi_b=\left[\begin{matrix}\frac{e_1}{\varphi_{b1}}, \cdots,\frac{e_{3n}}{\varphi_{b3n}}\\\end{matrix}\right]$, and $S_i\left(e_i\right) $ is defined as
\begin{equation}
    S_i(e_i) = \left\{
    \begin{aligned}
        1,         &\quad\quad e_i > 0\\
        0,         &\quad\quad e_i \leq 0
    \end{aligned}
    \right. .
\end{equation}
As such, the time derivative of $\xi_{ai}$ and $\xi_{bi}$ can be given by
\begin{equation}
\label{xiat}
    \dot{\xi}_{ai} = \frac{\dot{e}_i}{\varphi_{ai}} - \frac{e_i \dot{\varphi}_{ai}}{\varphi^2_{ai}},
\end{equation}
\begin{equation}
\label{xibt}
    \dot{\xi}_{bi} = \frac{\dot{e}_i}{\varphi_{bi}} - \frac{e_i \dot{\varphi}_{bi}}{\varphi^2_{bi}}.
\end{equation}
To facilitate the controller design process, a transient variable is defined as
\begin{equation}
\label{z}
    \mathbf z = \left[ \frac{\xi_1^2}{(1-\xi_1^2)e_1}, \frac{\xi_2^2}{(1-\xi_2^2)e_2}, \cdots, \frac{\xi_{3n}^2}{(1-\xi_{3n}^2)e_{3n}} \right]^T.
\end{equation}
Based on the transient variable and the visual errors, the kinematic controller is defined as follows:
\begin{equation}
    \label{outloop}
     \dot{\mathbf q} = - \widehat{\mathbf J}^{\dagger} [(\mathbf K_1 + \boldsymbol \eta )\mathbf e_p + \mathbf{K}_z \mathbf z]
\end{equation}
where $\mathbf{K}_1$ and $\mathbf {K}_z$ denote diagonal positive control gain matrices, $\mathbf{\widehat{J}^{\dagger}}$ represents the pseudo-inverse of the estimated Jacobian matrix. Furthermore, $\boldsymbol{\eta} = \text{diag}([\eta_1, \cdots, \eta_{3n}])$ denotes a time-varying gain matrix, where $\eta_i$ can be given by
\begin{equation}
    \eta_i = \sqrt{\left( \frac{\dot{\varphi}_{ai}}{\varphi_{ai}}\right)^2 +\left(\frac{\dot{\varphi}_{bi}}{\varphi_{bi}}\right)^2 + K_{\eta} }
\end{equation}
where $K_{\eta} \in \mathbb{R}^+$ represents a positive constant. Then, in order to eliminate the estimation errors, the adaptive law of $\mathbf W_{ij}$ can be designed as 
\begin{equation}
\label{adaptrbf}
    \dot{\mathbf W}_{ij}=\bm{\theta}\left( \mathbf x \right){\dot{q}}_j \mathbf z_{i}^T  - \gamma \mathbf W_{ij}
\end{equation}
where $\gamma \in \mathbb{R}^+$ represents a positive number. 

\subsection{Analysis of Lyapunov Stability}
Before proceeding with the Lyapunov stability proof, we first present an important lemma
\begin{lemma}\cite{5499019}
    For any positive constant $|v| < 1 $ and any positive integer $y$, one has
\end{lemma}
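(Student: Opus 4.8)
The plan is to prove the barrier-type logarithmic inequality
$$\ln\frac{1}{1-v^{2y}} \le \frac{v^{2y}}{1-v^{2y}}$$
for $|v|<1$ and any positive integer $y$. First I would make the single-variable reduction $u := v^{2y}$. Since $|v|<1$, we have $0 \le u < 1$: the exponent $2y$ being even guarantees $u \ge 0$, while $|v|<1$ guarantees $u<1$. Here $y$ being a positive integer is used only to certify that $2y$ is a legitimate even power keeping $u\ge 0$; the estimate itself would hold for any positive real exponent. Under this substitution the claim collapses to the classical scalar inequality $-\ln(1-u) \le \dfrac{u}{1-u}$ on $[0,1)$.

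Next I would establish the scalar inequality by a sign/monotonicity argument on the gap function
$$g(u) = \frac{u}{1-u} + \ln(1-u), \qquad u \in [0,1).$$
Evaluating at the left endpoint gives $g(0)=0$. Differentiating termwise,
$$g'(u) = \frac{1}{(1-u)^2} - \frac{1}{1-u} = \frac{u}{(1-u)^2} \ge 0$$
for all $u \in [0,1)$, so $g$ is non-decreasing there. Combining $g(0)=0$ with $g'\ge 0$ yields $g(u)\ge 0$, i.e. $\frac{u}{1-u} \ge -\ln(1-u) = \ln\frac{1}{1-u}$, which is precisely the reduced inequality. Re-substituting $u=v^{2y}$ then restores the stated bound.

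I do not anticipate a genuine obstacle: the argument is elementary calculus once the substitution collapses the two-parameter statement to one variable. The only point needing care is the domain bookkeeping — checking that $v^{2y}\in[0,1)$ so that $\ln\frac{1}{1-v^{2y}}$ is well defined and the denominator $1-u$ stays strictly positive, which keeps $g'$ finite on the open interval and rules out any blow-up. I would also note the feature that makes this lemma useful in the BLF analysis: both sides vanish at $u=0$, so near the target configuration where $\mathbf{e}_p\to 0$ the logarithmic barrier term is tightly dominated by the rational term, exactly the comparison that the stability proof invokes when differentiating the candidate Lyapunov function built from $\mathbf z$.
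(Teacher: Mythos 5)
Your argument is correct, but it cannot be ``the same approach as the paper,'' because the paper supplies no proof of this lemma at all: it is imported as a known result, cited directly from \cite{5499019}, and then invoked in the stability analysis. Your reduction $u := v^{2y} \in [0,1)$ followed by the gap function $g(u) = \frac{u}{1-u} + \ln(1-u)$, with $g(0)=0$ and $g'(u) = \frac{1}{(1-u)^2} - \frac{1}{1-u} = \frac{u}{(1-u)^2} \ge 0$, is a complete and elementary derivation, and your domain bookkeeping (the even exponent $2y$ forces $u \ge 0$, while $|v|<1$ forces $u<1$, so $1-u>0$ and the logarithm is well defined) is exactly the care the statement needs; your observation that the integer nature of $y$ is inessential is also a genuine (mild) generalization over the cited form. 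What your route buys is self-containedness and transparency; what the paper's citation buys is brevity. One small mismatch to repair: the lemma asserts the \emph{strict} inequality $\ln\frac{1}{1-v^{2y}} < \frac{v^{2y}}{1-v^{2y}}$, whereas monotonicity from $g' \ge 0$ alone yields only $\le$, with equality at $v=0$. Since the hypothesis takes $|v|$ positive, you have $u = v^{2y} > 0$, and then $g(u) = \int_0^u s(1-s)^{-2}\,ds > 0$ because the integrand is strictly positive on $(0,u]$; adding this one line upgrades your conclusion to the strict inequality actually claimed.
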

\begin{equation}
\label{lemma1}
    \text{ln}\frac{1}{1-v^{2y}} < \frac{v^{2y}}{1-v^{2y}}.
\end{equation}
\begin{theorem}
    For the DOM system illustrated in Fig. \ref{domp}, by employing the controller (\ref{outloop}) with the RBFNN estimator (\ref{rbfest}), whose weight matrices are updated according to the adaptive rules (\ref{adaptrbf}), it can be guaranteed that the closed-loop system is semiglobally uniformly ultimately bounded when the initial state of the system is bounded.
\end{theorem}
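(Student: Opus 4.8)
The plan is to construct a composite barrier Lyapunov function that simultaneously penalizes the transferred errors and the neural-network weight mismatch, and then show its derivative is dominated by a decaying term plus a constant. Concretely, I would take
\begin{equation}
V = \frac{1}{2}\sum_{k=1}^{3n}\ln\frac{1}{1-\xi_k^2} + \frac{1}{2}\sum_{i=1}^{n}\sum_{j=1}^{12}\mathrm{tr}\!\left(\widetilde{\mathbf W}_{ij}^T\widetilde{\mathbf W}_{ij}\right),
\end{equation}
where the first sum is a valid barrier as long as every $|\xi_k|<1$, i.e.\ as long as the prescribed bounds $|e_k|<\delta_k\mu_k$ hold. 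The entire argument then reduces to proving that $V$ stays finite, because finiteness of the logarithmic term is exactly equivalent to the constraints never being violated and boundedness of the second term is the weight-estimate boundedness.

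First I would differentiate the barrier part. Writing $\varphi_k$ for the active bound ($\varphi_{bk}$ when $e_k>0$, else $\varphi_{ak}$), the key algebraic identity is $z_k = \xi_k/[(1-\xi_k^2)\varphi_k]$, which follows directly from (\ref{z}) together with $\xi_k=e_k/\varphi_k$. Using it collapses $\dot V_1$ into $\mathbf z^T\dot{\mathbf e}_p$ minus the time-varying boundary term $\sum_k \frac{\xi_k^2}{1-\xi_k^2}\frac{\dot\varphi_k}{\varphi_k}$. Substituting $\dot{\mathbf e}_p=\mathbf J\dot{\mathbf q}$, splitting $\mathbf J=\widehat{\mathbf J}+\widetilde{\mathbf J}$, and inserting the controller (\ref{outloop}) leaves $-\mathbf z^T(\mathbf K_1+\boldsymbol\eta)\mathbf e_p-\mathbf z^T\mathbf K_z\mathbf z$ together with the Jacobian-error contribution $\mathbf z^T\widetilde{\mathbf J}\dot{\mathbf q}=\sum_{i,j}\mathbf z_i^T(\widetilde{\mathbf W}_{ij}^T\boldsymbol\theta(\mathbf x)+\boldsymbol\epsilon_{ij})\dot q_j$. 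The decisive step is that differentiating the weight part of $V$ and inserting the adaptive law (\ref{adaptrbf}) produces exactly $-\sum_{i,j}\dot q_j\,\mathbf z_i^T\widetilde{\mathbf W}_{ij}^T\boldsymbol\theta(\mathbf x)$, which annihilates the regressor term above; this is precisely why the update law was built with the factor $\boldsymbol\theta(\mathbf x)\dot q_j\mathbf z_i^T$.

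What remains are three benign pieces. The $\gamma$-modification term I would complete the square on, obtaining $-\frac{\gamma}{2}\sum_{i,j}\|\widetilde{\mathbf W}_{ij}\|_F^2+\frac{\gamma}{2}\sum_{i,j}\|\mathbf W_{ij}^\ast\|_F^2$. The approximation residual $\sum_{i,j}\mathbf z_i^T\boldsymbol\epsilon_{ij}\dot q_j$ I would dominate with Young's inequality, absorbing the resulting $\|\mathbf z\|^2$ into $-\mathbf z^T\mathbf K_z\mathbf z$ provided $\lambda_{\min}(\mathbf K_z)$ is large enough. Most importantly, the destabilizing boundary term satisfies $\dot\varphi_k/\varphi_k=\dot\mu_k/\mu_k<0$ for both active bounds, so it equals $+\sum_k|\dot\mu_k/\mu_k|\frac{\xi_k^2}{1-\xi_k^2}$; combined with $-\mathbf z^T(\mathbf K_1+\boldsymbol\eta)\mathbf e_p=-\sum_k(K_{1k}+\eta_k)\frac{\xi_k^2}{1-\xi_k^2}$ the net barrier coefficient is $|\dot\mu_k/\mu_k|-K_{1k}-\eta_k$, and since (\ref{outloop}) sets $\eta_k=\sqrt{2(\dot\mu_k/\mu_k)^2+K_\eta}$ one verifies $\eta_k\ge|\dot\mu_k/\mu_k|$, making this term strictly negative. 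Finally I would invoke the Lemma through (\ref{lemma1}) with $y=1$ to replace $\sum_k\frac{\xi_k^2}{1-\xi_k^2}$ by a multiple of $\sum_k\ln\frac{1}{1-\xi_k^2}=2V_1$, arriving at $\dot V\le -\rho V + C$ for positive $\rho,C$; the comparison lemma then yields $V(t)\le V(0)e^{-\rho t}+C/\rho$, which is the claimed semiglobal uniform ultimate boundedness and keeps every $|\xi_k|<1$.

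The step I expect to fight hardest with is the Jacobian-error/residual handling. The clean cancellation presumes $\widehat{\mathbf J}\widehat{\mathbf J}^\dagger=\mathbf I$, which holds only when the estimated Jacobian retains full row rank (so the pseudo-inverse is a genuine right inverse); I would either restrict to the regime $3n\le 12$ or propagate the projector $\widehat{\mathbf J}\widehat{\mathbf J}^\dagger$ explicitly. Moreover, $\dot q_j$ inside the residual term itself depends on $\mathbf z$ through the controller, so the bound on $\sum_{i,j}\mathbf z_i^T\boldsymbol\epsilon_{ij}\dot q_j$ is genuinely quadratic in $\mathbf z$; closing the loop therefore requires restricting the state to a compact set on which $\boldsymbol\epsilon_{ij}$, $\widehat{\mathbf J}^\dagger$ and $\dot{\mathbf q}$ are bounded and selecting $\mathbf K_z$ accordingly. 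This compact-set restriction is exactly the reason the conclusion is only \emph{semiglobal} rather than global.
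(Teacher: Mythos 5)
Your proposal follows essentially the same route as the paper: the same composite barrier Lyapunov function, the same collapse of $\dot V_1$ into $\mathbf z^T\dot{\mathbf e}_p$ minus the boundary-drift term, the same cancellation of the regressor by the adaptive law, the same domination of $\dot\varphi_k/\varphi_k$ by $\eta_k$, and the same use of the lemma to convert $\sum_k\xi_k^2/(1-\xi_k^2)$ into a multiple of $V_1$ so that $\dot V\le -aV+b$. The two difficulties you flag at the end --- that $\widehat{\mathbf J}\widehat{\mathbf J}^{\dagger}=\mathbf I$ fails when $3n>12$ (as in the experiments, where $3n=18$) and that the residual term is genuinely quadratic in $\mathbf z$ through $\dot{\mathbf q}$ --- are real, but the paper's own proof passes over both in the same way, so they are not gaps relative to the published argument.
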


\begin{proof}

According to the definition of the error vector $\mathbf{e}_p$, we can obtain its time derivative as 
\begin{equation}
     \dot{\mathbf e}_p =  \dot{\mathbf p} - \dot{ \mathbf p}^*.
\end{equation}
Since the desired configuration of the deformable object is stationary, then one has $\dot{\mathbf e}_p = \dot{\mathbf p}$.
Substituting (\ref{jac}) into (\ref{outloop}), ${\dot{\mathbf e}}_p$ can be rewritten as
\begin{equation}
\label{e_p}
     \dot{\mathbf e}_p = -(\mathbf K_1 + \boldsymbol \eta) \mathbf e_p - \mathbf K_z \mathbf z + \Delta\dot{\mathbf p}_{net}.
\end{equation}
where $\Delta\dot{\mathbf p}_{net}$ can be written as
\begin{equation}
\Delta\dot{\mathbf p}_{net}=\mathbf J\dot{\mathbf q}-\widehat{\mathbf J}\dot{\mathbf q}=\left[\begin{matrix}\widetilde{\mathbf j}_{11}&\cdots&\widetilde{\mathbf j}_{112}\\\vdots&\widetilde{\mathbf j}_{ij}&\vdots\\\widetilde{\mathbf j}_{n1}&\cdots&\widetilde{\mathbf j}_{n12}\\\end{matrix}\right]\left[\begin{matrix}\dot{q}_1\\\vdots\\\dot{q}_{12}\\\end{matrix}\right]
\end{equation}
Define the following Lyapunov-like function candidate as
\begin{equation}
    V(t) = V_1(t) + V_2(t)
\end{equation}
where $V_1(t) =\displaystyle\sum^{3n}_{i=1} \left(\frac{\mathcolorbox{white}{S_i}}{2}\text{ln}\frac{1}{1-\xi_{bi}^2} + \frac{1-\mathcolorbox{white}{S_i}}{2}\text{ln}\frac{1}{1-\xi_{ai}^2} \right)$,  and $V_2(t) =\displaystyle\frac{1}{2} \sum_{i=1}^{12}\displaystyle\sum_{j=1}^{n} \text{tr}(\widetilde{\mathbf W}_{ij}^T\widetilde{\mathbf W}_{ij})$. Differentiating $V_1(t)$ with respect to time, then one has
\begin{equation}
\label{shangshi1}
    \dot{V}_1(t) = \sum^{3n}_{i=1}\left(\frac{S_i\xi_{bi}\dot{\xi}_{bi}}{1-\xi_{bi}^2} +\frac{(1-S_i)\xi_{ai}\dot{\xi}_{ai}}{1-\xi_{ai}^2} \right).
\end{equation}
According to the process 1) in Appendix, then one has
\begin{subequations}
\renewcommand{\theequation}{26} 
\begin{align}
    \dot{V}_1(t) &= \sum^{3n}_{i=1}\bigg[\frac{S_i\xi^2_{bi}}{(1-\xi^2_{bi})e_i}\bigg(\dot{e}_i - \frac{e_i\dot{\varphi}_{bi}}{\varphi_{bi}}\bigg) \bigg] \nonumber\\ 
    &+ \sum^{3n}_{i=1}\bigg[\frac{(1-S_i)\xi^2_{ai}}{(1-\xi^2_{ai})e_i}\bigg(\dot{e}_i - \frac{e_i\dot{\varphi}_{ai}}{\varphi_{ai}}\bigg) \bigg] \tag{26} \label{dv1}\\
    & = \sum^{3n}_{i=1}\bigg[\frac{\xi_i^2 \dot{e}_i}{(1-\xi_i^2)e_i} - \frac{S_i\xi^2_{bi}}{1-\xi^2_{bi}}\frac{\dot{\varphi}_{bi}}{\varphi_{bi}} - \frac{(1-S_i)\xi^2_{ai}}{1-\xi^2_{ai}} \frac{\dot{\varphi}_{ai}}{\varphi_{ai}}\bigg]\nonumber.
\end{align}
\end{subequations}
Then, introducing (\ref{z}) into (\ref{dv1}), one has
\begin{equation}
\label{dv12}
\begin{aligned}
    \dot{V}_1(t)= & \mathbf z^T\dot{\mathbf e}_p - \sum^{3n}_{i=1} \bigg[\frac{S_i\xi^2_{bi}}{1-\xi^2_{bi}}\frac{\dot{\varphi}_{bi}}{\varphi_{bi}} + \frac{(1-S_i)\xi^2_{ai}}{1-\xi^2_{ai}} \frac{\dot{\varphi}_{ai}}{\varphi_{ai}}\bigg] .
\end{aligned}
\end{equation}
Substituting (\ref{e_p}) into (\ref{dv12}), then one has
\begin{subequations}
\renewcommand{\theequation}{28} 
\begin{align}
    \dot{V}_1(t)& = -\mathbf z^T \mathbf K_z \mathbf z - \mathbf z^T(\mathbf K_1 + \boldsymbol \eta)\mathbf e_p +  \mathbf z^T \Delta \dot{\mathbf p}_{net} \nonumber\\
    & - \sum^{3n}_{i=1} \bigg[\frac{S_i\xi^2_{bi}}{1-\xi^2_{bi}}\frac{\dot{\varphi}_{bi}}{\varphi_{bi}} + \frac{(1-S_i)\xi^2_{ai}}{1-\xi^2_{ai}} \frac{\dot{\varphi}_{ai}}{\varphi_{ai}}\bigg] .
\end{align}
\end{subequations}

As such, it can be concluded that
\begin{subequations} 
\renewcommand{\theequation}{29} 
\begin{align}
    \dot{V}_1(t)& = -\mathbf z^T \mathbf K_z \mathbf z - \mathbf z^T(\mathbf K_1 + \boldsymbol\eta)\mathbf e_p  \nonumber\\
    & - \sum^{3n}_{i=1} \bigg[\frac{S_i\xi^2_{bi}}{1-\xi^2_{bi}}\frac{\dot{\varphi}_{bi}}{\varphi_{bi}} +\frac{(1-S_i)\xi^2_{ai}}{1-\xi^2_{ai}} \frac{\dot{\varphi}_{ai}}{\varphi_{ai}}\bigg]\nonumber\\ 
    & +  \sum^{12}_{i=1}\sum^{n}_{j=1} z_j(\widetilde{\mathbf W}_{ij}\bm{\theta}(\mathbf x)+\boldsymbol \epsilon_{ij})\dot{q}_i. \label{eq29}
\end{align}
\end{subequations}
According to the process 2) in Appendix, we have
\begin{subequations} 
\renewcommand{\theequation}{30} 
\begin{align}
\label{dv13}
    \dot{V}_1(t) &\leq \sum^{3n}_{i=1}\bigg(-\frac{\xi_i^2}{1-\xi_i^2} K_{1i}\bigg) - \mathbf z^T \mathbf K_z \mathbf z \nonumber\\
    & +\sum^{12}_{i=1}\sum^{n}_{j=1}z_j(\widetilde{\mathbf W}_{ij}\bm{\theta}(\mathbf x)+\boldsymbol\epsilon_{ij})\dot{q}_i.
\end{align} 
\end{subequations}
where $K_{1i}$ denotes the $i$-th element in the diagonal matrix $\mathbf K_1$.
The time derivation of $V_2(t)$ is
\begin{equation}
\label{v2t}
    \dot{V}_2(t) = -\sum_{i=1}^{12}\sum_{j=1}^{n}\text{tr}(\widetilde{\mathbf W}_{ij}\dot{\mathbf W}_{ij})
\end{equation}
Introducing the adaptive law (\ref{adaptrbf}) into (\ref{v2t}), then one has

\begin{equation}
\label{v22}
    \dot{V}_2(t) =  -\sum_{i=1}^{12}\sum_{j=1}^{n}\text{tr}(\widetilde{\mathbf W}_{ij}\bm{\theta}(\mathbf x)\dot{q}_iz_j^T) + \gamma\sum_{i=1}^{12}\sum_{j=1}^{n} \text{tr}(\widetilde{\mathbf W}_{ij}^T \mathbf W_{ij})
\end{equation}
Using (\ref{dv13}) and (\ref{v22}), it can be concluded that
\begin{equation}
\label{kuaile}
\begin{aligned}
    \dot{V}(t) &\leq \sum^{3n}_{i=1}\big(-\frac{\xi_i^2}{1-\xi_i^2} K_{1i}\big) -\mathbf K_z \mathbf z^T \mathbf z +\Vert \mathbf E \Vert \Vert\dot{\mathbf q}\Vert \mathbf z\Vert  \\& 
     - \gamma\sum_{i=1}^{12}\sum_{j=1}^{n} \Vert \mathbf W_{ij} \Vert^2 + \gamma\sum_{i=1}^{12}\sum_{j=1}^{n} \text{tr}(\widetilde{\mathbf W}_{ij}^T \mathbf W_{ij}) 
\end{aligned}
\end{equation}
where $\mathbf E$ denotes a matrix consisting of $\boldsymbol\epsilon_{ij}$.  Substituting (\ref{outloop}) into (\ref{kuaile}), then one has
\begin{equation}
    \begin{aligned}
        \dot{V}(t) &\leq \sum^{3n}_{i=1}\big(-\frac{\xi_i^2}{1-\xi_i^2} K_{1i}\big)- \mathbf K_z(1-\Vert \mathbf E \Vert)\Vert \mathbf z\Vert^2\\
        &- \frac{\gamma}{2}\sum_{i=1}^{12}\sum_{j=1}^{n} \Vert \mathbf W_{ij} \Vert^2 + \frac{\gamma}{2}\sum_{i=1}^{12}\sum_{j=1}^{n} \Vert \mathbf W_{ij}^* \Vert^2\\ & + \Vert \mathbf E \Vert\Vert\dot{\mathbf q}\Vert\Vert \mathbf z\Vert \leq -aV(t) + b
    \end{aligned}
\end{equation}
where $a=\text{min}(\bar{\sigma}(\mathbf K_1),\frac{\gamma}{2})$, $b=\Vert \mathbf E \Vert\Vert\dot{\mathbf q}\Vert\Vert \mathbf z\Vert + \frac{\gamma}{2}\sum_{i=1}^{12}\sum_{j=1}^{n} \Vert \mathbf W_{ij}^* \Vert^2$ and $\bar{\sigma}(\mathbf K_1)$ denotes the biggest eigenvalue of $\mathbf K_1$. According to the universal approximation theorem \cite{HORNIK1989359}, $\Vert \mathbf E \Vert$ should be extremely small, so we assume $1-\Vert \mathbf E \Vert>0$. Furthermore, combined with the assumption \ref{slow}, it can be concluded that $b =  k_b\Vert\boldsymbol z\Vert$, where $ k_b = \Vert \mathbf E \Vert\Vert\dot{\mathbf q}\Vert$ denotes a small positive number. When $\Vert \mathbf e_p(0) \Vert $ is bounded by $\overline{e}$, then it can be concluded $V(t)$ will converge to a small compact set
\begin{subequations}
\renewcommand{\theequation}{35} 
\begin{align}
    \label{set1}\Omega_v &= \{ V(t) \in \mathbb{R}^+ \mid V(t) \leq C_v \}, \\
    \quad C_v &= \frac{\gamma}{2a}\sum_{i=1}^{12}\sum_{j=1}^{n} \Vert \mathbf W_{ij}^* \Vert^2 + \frac{k_b}{a} \Vert z \Vert +V(0). \nonumber
\end{align}
\end{subequations}
Let $C_v^*> C_v, C_v^* \in R^+$ be a large enough constant such that the initial value of $V(t)$ is inside the compact set $\Omega_{v^*} = \{ V(t) \in R^+ | V(t) \leq C_v^* \}$. Therefore, $\mathbf e_p(0)$ and $\dot{V}(0)$ are bounded. If $\mathbf e_p(t)$ becomes unbounded, then $V(t)$ as well as $\dot{V}(t)$ will also become unbounded, and there must be a time instant $t_1$ such that
\begin{equation}
\label{conditionbianjie}
    1) \quad V(t_1) = C_v^* \quad \text{and}\quad 2) \quad \dot{V}(t_1)>0.
\end{equation}
However, according to (\ref{set1}), if $V(t_1)= C_v^* > C_v$, then $\dot{V}(t_1) < 0$, and this obviously contradicts (\ref{conditionbianjie}). Therefore, $\mathbf e_p(t)$ is ultimately bounded, as such $b$ is bounded \cite{8815930}. Therefore, the transferred visual tracking error term $\text{ln}\frac{1}{1-\xi_i^2}$ and the weight matrices of the neural networks will be bounded and convergent to a small compact set around zero. {Then, it can be obtained that $\left| \xi_i \right| \leq 1$, and $\varphi_a < e_i < \varphi_b$ can be guaranteed. Thus, we can get the conclusion that the tracking error is always remained within the boundaries during the control process, which improves the transient performance of the system and reduces the convergence time.}
\end{proof}

\begin{figure}[!t]\centering
	\includegraphics[width=\linewidth]{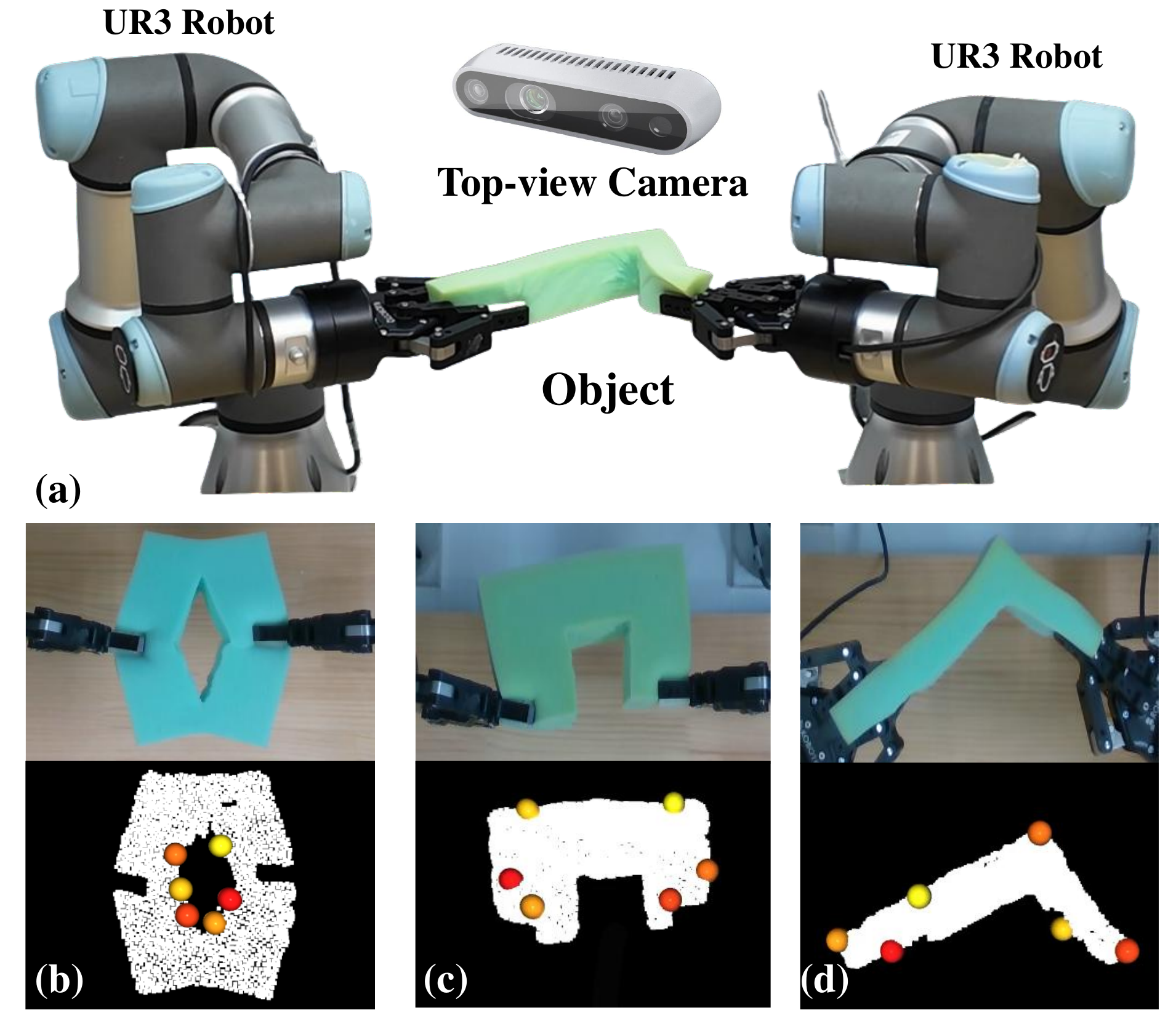}
	\caption{Experiment setup and the experimental objects. (a) the eye-to-hand robot-camera platform; (b) the object manipulated in task A (A piece of sponge with a slit incision in the middle, allowing it to be stretched) with the corresponding key points; (c) the object manipulated in task B (A sponge with a square incision) with the corresponding key points; (d) the object manipulated in task C (An ``L"-shape sponge) with the corresponding key points.}
\label{setup}
\end{figure}

\begin{table}[b]
    \centering
    \caption{Parameters of prescribed boundaries of different tasks.}
    \label{t1}
    \begin{tabularx}{0.4\textwidth} { 
  | >{\centering\arraybackslash}X 
  | >{\centering\arraybackslash}X 
  | >{\centering\arraybackslash}X 
  | >{\centering\arraybackslash}X | }
 ine
  & Task A & Task B & Task C \\
 ine
 $\mu_{x0}$  & 0.1  & 0.1 & 0.15  \\
 ine
 $\mu_{x\infty}$  & 0.01  & 0.015 & 0.015  \\
 ine
 $\alpha_x$  & 0.2  & 0.05 & 0.02 \\
 ine
 $\mu_{y0}$  & 0.1  & 0.15 & 0.15  \\
 ine
 $\mu_{y\infty}$  & 0.01  & 0.015 & 0.015 \\
 ine
 $\alpha_y$  & 0.2  & 0.05 & 0.02 \\
 ine
 $\mu_{z0}$  & 0.1  & 0.15 & 0.05  \\
 ine
 $\mu_{z\infty}$  & 0.01  & 0.015 & 0.01  \\
 ine
 $\alpha_z$  & 0.2  & 0.02 & 0.02 \\
 ine
\end{tabularx}
\end{table}

\section{Results}
\subsection{Experiment Setup}
Our experimental setup consisted of two Universal Robots 3 (UR3) and one Intel RealSense D435 camera. {For each task, we sampled 500 sets of point clouds for key-grid network training. In the data collection process and subsequent experiments, we used outlier filtering and farthest point sampling to  minimize the effect of environmental and sensing noise, and ensured that the number of points in each point cloud was fixed at 2048.} Then, We extracted 32 key points from each frame of point cloud data and selected six key points in the region of interest (ROI) as features for different tasks. 
Fig. \ref{setup} illustrates the setup for the experiments and the manipulation objects used in the three DOM tasks presented in this paper with the corresponding feature points selected in ROI. For all tasks, the controller parameters are set to $\mathbf K_1 = 2 \mathbf I_{18 \times 18}$, $\mathbf K_z = \mathbf I_{18 \times 18}$, $K_{\eta} = 0.5$. For each experiment, boundary conditions were designed for the $x$, $y$, and $z$ axes, which applied to all feature points and satisfied the following expression: $U_{bk} = -U_{ak} = (\mu_{k0 } - \mu_{k\infty})\text{e}^{-\alpha_k t} + \mu_{k\infty}$ where $k = \{ x, y, z\}$, the boundary conditions of experiments are listed in Table \ref{t1}. 

The Key-Grid network was first trained on a personal computer equipped with an NVIDIA RTX 3080Ti GPU with 12G memory and then deployed on a laptop equipped with an NVIDIA RTX 3060 GPU with 8 GB of memory to perform all experiments. Point cloud data of the sponges were captured using the top-view camera, and processed through OpenCV, and the neural network was constructed using PyTorch to extract the key points. {In addition, we sampled 100 sets of RBFNN input signals by performing random motions near the initial configuration, and then we obtained the centers of the basis function $\mu$ by the K-nearest-neighbor method.} The UR robot was controlled via ROS and the Urx library. To ensure the safety of the robots and avoid potential damage, the maximum joint speed of the UR robots was limited to 0.03 rad/s.

\subsection{Validation of Key Points Extraction}
To evaluate the effectiveness of the Key-Grid network, its performance is compared against PointNet++ \cite{10.5555/3295222.3295263}, Skeleton-Merger \cite{9578229}, SelfGeo \cite{zohaib2024selfgeo} and an learning-free method FPFH \cite{4650967}. We use the "L"-shaped sponge as an illustrative example to compare the key point extraction performance of various methods, as depicted in the Fig. \ref{compare_key}. The results demonstrate that Key-Grid achieves the best performance. Moreover, under deformation conditions, Key-Grid exhibits superior temporal and spatial consistency in the extracted key points.


\begin{figure}[h]\centering
	\includegraphics[width=\linewidth]{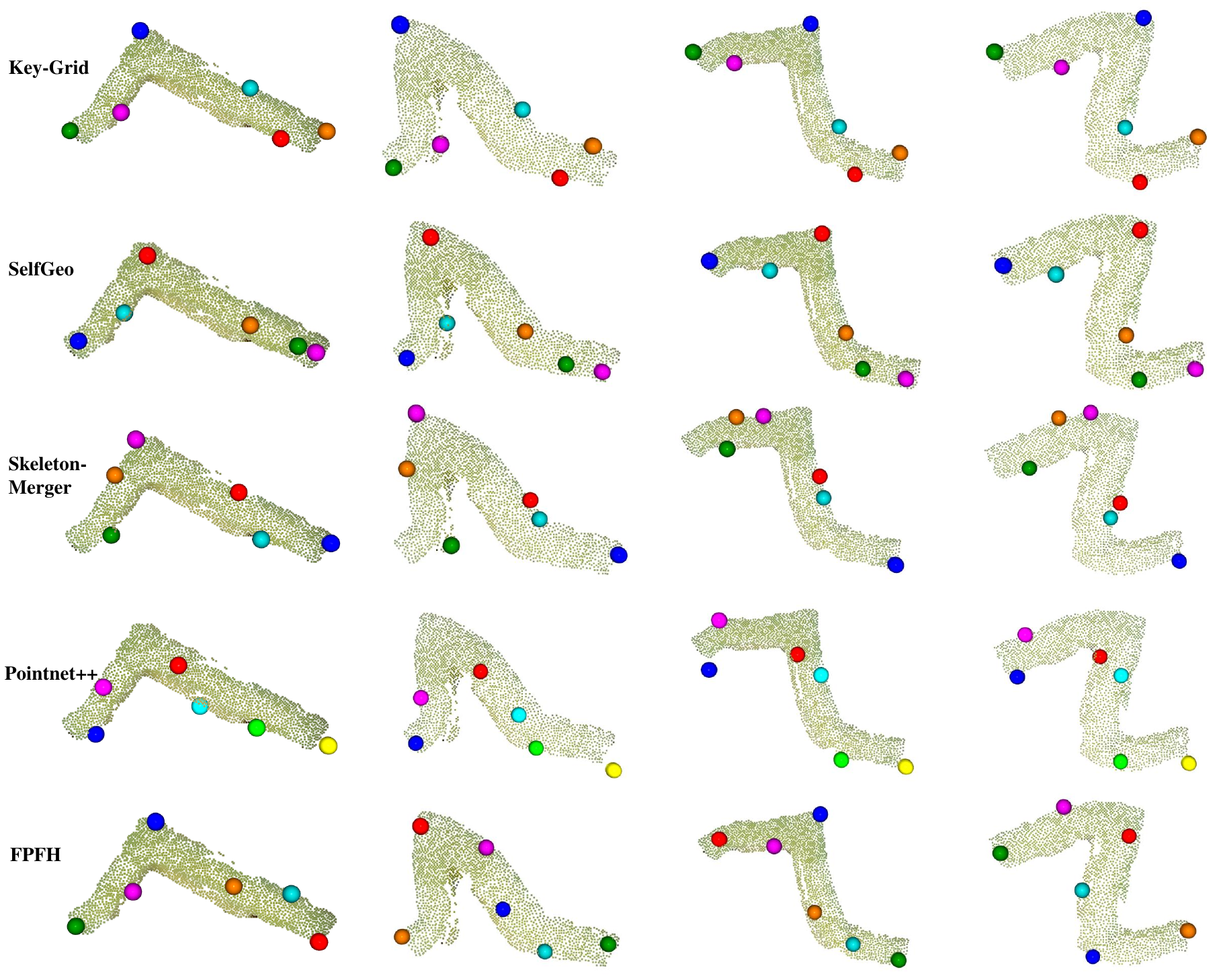}
	\caption{{Comparison of key point extraction. Compared with the other three learning-based methods, the key points extracted by Key-Grid have a more uniform distribution on the point cloud, and the key points have better spatiotemporal consistency. For the learning-free method FPFH, although the key points can be evenly distributed on the surface of the object, the corresponding spatial relationship is not preserved at all.}}
\label{compare_key}
\end{figure}

\begin{figure*}[]\centering
	\includegraphics[width=1\linewidth]{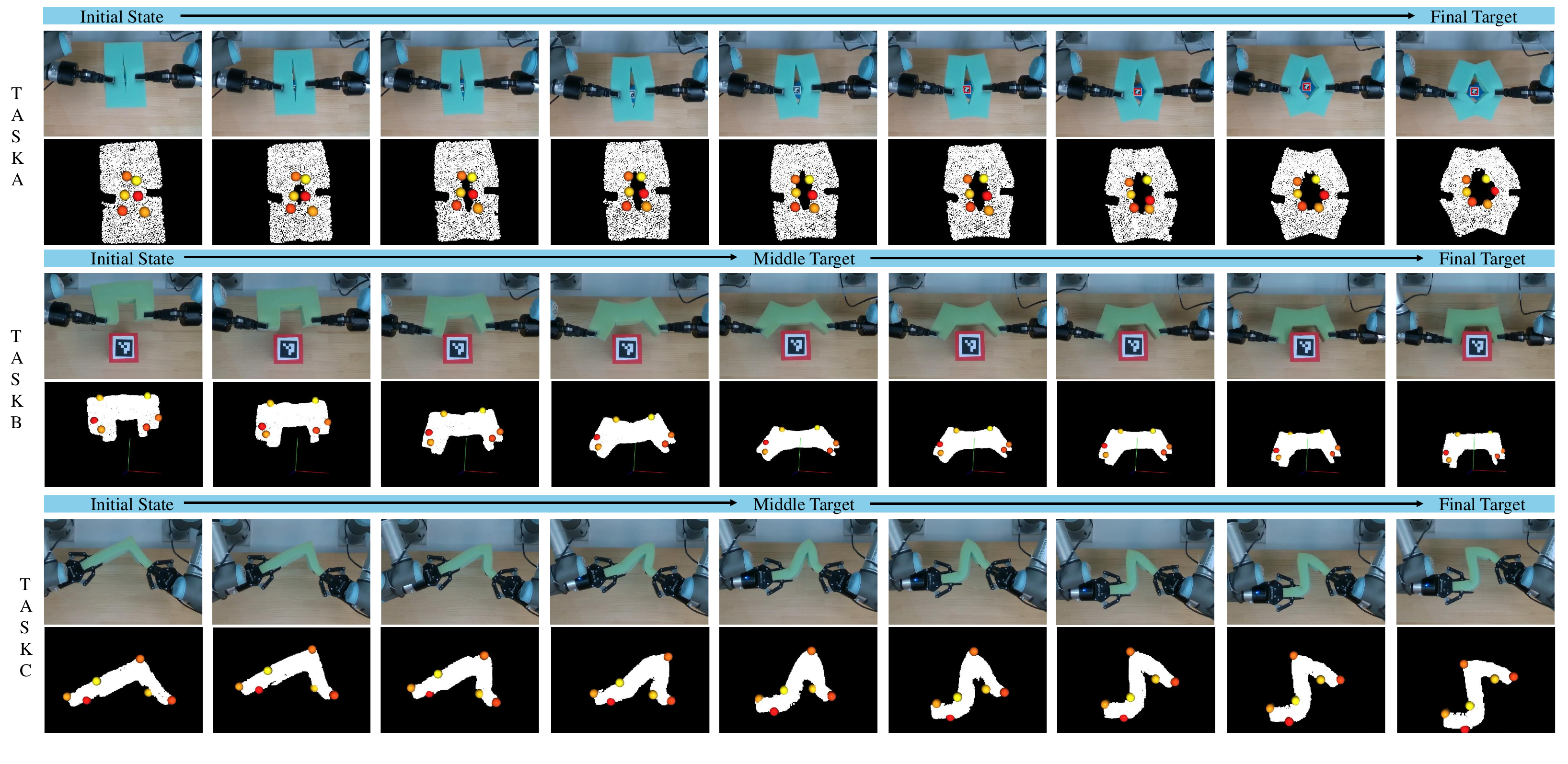}
	\caption{Qualitative results of three experiments. In Task A, the objective is to manipulate a sponge with a central incision, where the region surrounding the incision is designated as the ROI. Task B involves manipulating a sponge with a square incision at its center. The deformation of the areas on the left and right sides of the incision is significantly influenced by the robot's manipulation, and these areas are therefore defined as the ROI. Task C focuses on manipulating an "L"-shaped sponge to achieve different letter-shaped configurations, with the entire surface of the sponge designated as the ROI. To ensure uniform feature representation, key points are distributed evenly across the sponge's surface. For enhanced control accuracy, Tasks B and C are further divided into two consecutive DOM phases, each with distinct target configurations.}
\label{exp_all}
\end{figure*}

\subsection{Experiments}
The target of task A is to manipulate a sponge with an incision in the middle as shown in Fig. \ref{setup}(b), and to open the incision to a size large enough so that the camera can detect the Aruco code hidden under the sponge. The experimental process is shown in Fig. \ref{exp_all}, and the total position errors, as illustrated in Fig. \ref{cp1}(a), decreases steadily over time, reflecting the effectiveness of the control strategy in achieving DOM. The 3D trajectory visualization in Fig. \ref{cp1}(b) shows that the key points tend to move along straight paths toward the target positions, indicating a smooth and efficient control process. Furthermore, Fig. \ref{cp1}(c) shows that the position tracking errors in the X, Y, and Z axes remain within the preset boundaries. These results verify the reliability of the proposed control framework.
\begin{figure}[!t]\centering
	\includegraphics[width=\linewidth]{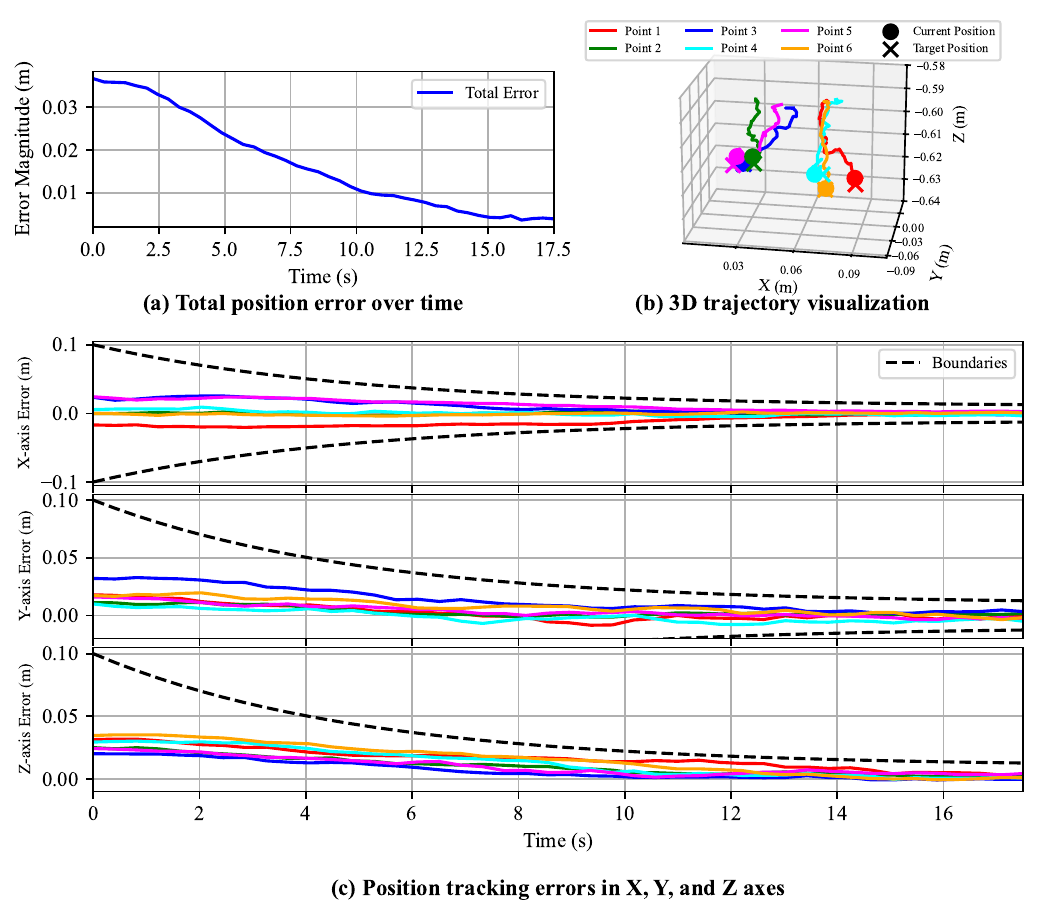}
	\caption{{Control performance of task A: (a) The curve of the norm of error vector $\Vert e_p \Vert$, (b) The trajectories of the key points, (c) The errors and corresponding boundaries of key points on x, y, and z-axes.}}
\label{cp1}
\end{figure}

The target of task B is to manipulate a sponge with a square incision in the middle, as shown in Fig. \ref{setup}(c), to grab a rigid rectangular block. To accomplish this, the control task is divided into two stages: the incision opening stage and the grabbing stage, with a five-second interval between the two stages for clear differentiation, as depicted in Fig. \ref{exp_all}. {It is worth mentioning that for the two different stages, we set different targets respectively, and the preset performance boundaries of the second stage will be reset when stage two begins.} The control performance is shown in Fig. \ref{cp2}. The total error curve in Fig. \ref{cp2}(a) shows a significant reduction during both stages, reflecting effective control. In Fig. \ref{cp2}(b), the key points exhibit smooth trajectories, progressing systematically toward their middle and final target positions. Fig. \ref{cp2}(c) demonstrates that the position tracking errors in the X, Y, and Z axes remain within acceptable boundaries throughout the process, verifying the accuracy and robustness of the control strategy. It should be noted that, due to the large initial errors on the z-axis and the limited joint speed of the robots, the error boundary of the z-axis was designed to be looser.
\begin{figure}[!t]\centering
	\includegraphics[width=\linewidth]{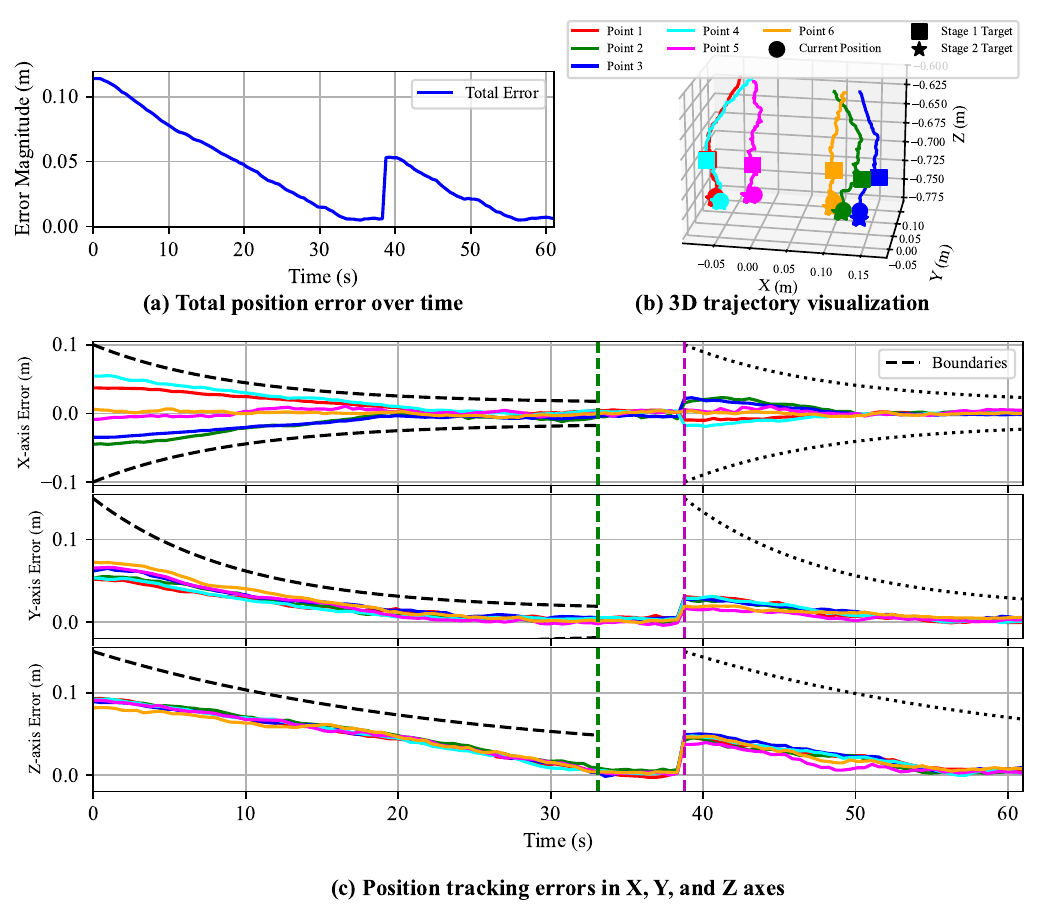}
	\caption{{Control performance of task B in two stages: (a) The curve of the norm of error vector $\Vert e_p \Vert$, (b) The trajectories of the key points, (c) The errors and corresponding boundaries of key points on x, y, and z-axes.}}
\label{cp2}
\end{figure}

Task C is to manipulate an "L"-shaped sponge to form different letter shapes, as illustrated in Fig. \ref{setup}(d). This task is divided into two stages, separated by a five-second interval for clarity. The objective of the first stage is to form an asymmetrical "V" shape, while the second stage aims to create an "S" shape, as shown in Fig. \ref{exp_all}. Same as Task B, the two stages have different target configurations and independent preset boundaries. The control performance is presented in Fig. \ref{cp3}. In Fig. \ref{cp3}(a), the total error decreases significantly in both stages, indicating effective control. Fig. \ref{cp3}(b) shows smooth and coordinated movements of the key points, resulting in the desired shapes. Fig. \ref{cp3}(c) demonstrates that the tracking errors along the X, Y, and Z axes remain within the preset boundaries, validating the efficacy of our proposed method.
\begin{figure}[t]\centering
	\includegraphics[width=\linewidth]{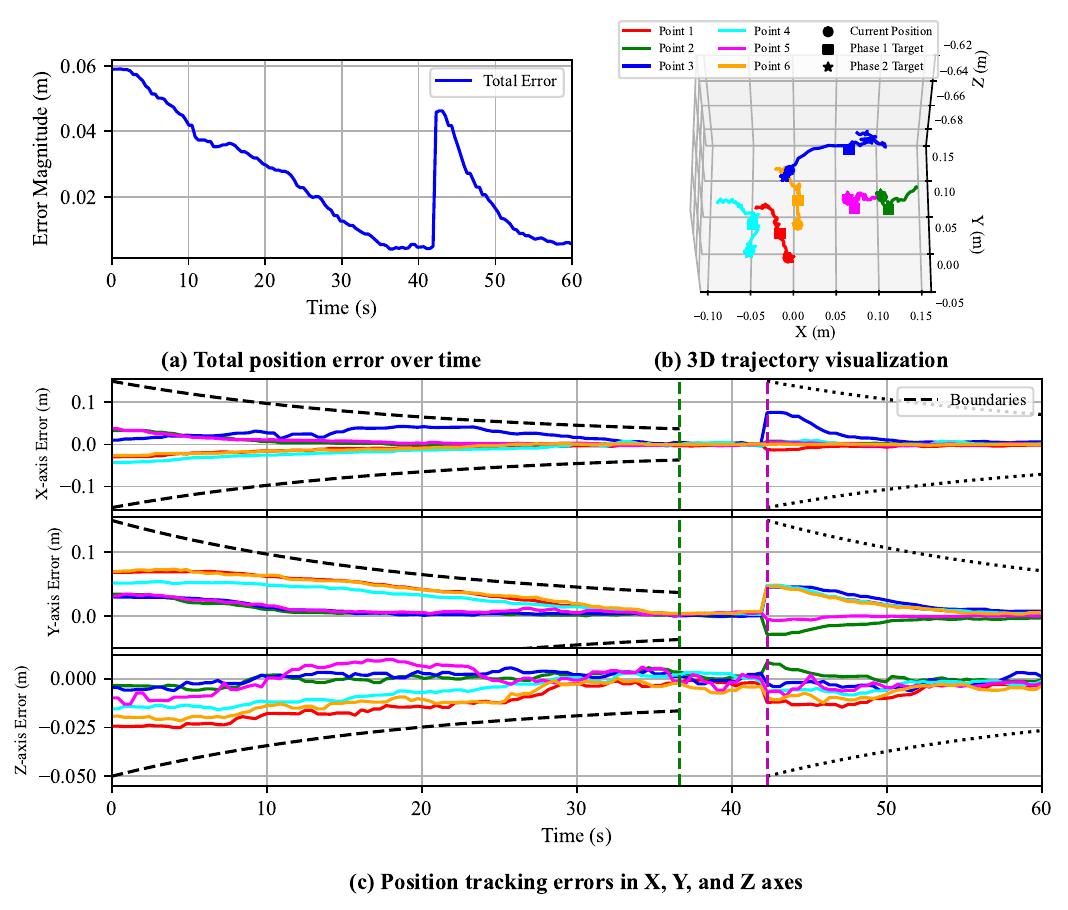}
	\caption{{Control performance of task B in two stages: (a) The curve of the norm of error vector $\Vert e_p \Vert$, (b) The trajectories of the key points, (c) The errors and corresponding boundaries of key points on x, y, and z-axes.}}
\label{cp3}
\end{figure}

\subsection{Comparative Study}

{To evaluate the effectiveness of the proposed algorithm, we conducted a comparative analysis against two existing key points-based methods: the GMLC presented in \cite{9888782}, which utilized manually marked key points and designed an adaptive optimization-based controller, and the G-DOOM method introduced in \cite{9811597}, which extracted keypoints from depth images and designed a graph network-based MPC controller. Considering the needs of ablation experiments, we only compare three control methods, and the key points are uniformly extracted using Key-Grid.} For each of the three distinct tasks, 20 trials were conducted for each method to ensure a comprehensive evaluation. The specific data for steady-state errors, convergence times, and success rates are presented in Table \ref{t3}, providing a detailed quantitative evaluation of the proposed method. Additionally, both quantitative and qualitative analyses are illustrated in Fig. \ref{compare}, offering a comprehensive comparison of performance across different approaches. From these results, it is evident that the proposed method consistently achieves smaller steady-state errors, faster convergence times, and higher success rates across all three tasks. {This is because the optimization-based controller in GMLC and the MPC controllers in G-DOOM can not to improve the transient performance of the system through the design of dynamic constraints. In contrast, the PPC controller proposed in this paper addresses this limitation by incorporating boundary constraints, which effectively enhances the system's transient performance.} Furthermore, these comparison between the success rates highlight the robustness of the proposed method in DOM tasks, effectively handling complex deformable dynamics and outperforming existing methods under various conditions.
\begin{table}[ht]
    \centering
    \caption{Performance of different control methods.}
    \label{t3}
    \begin{tabular*}{0.487\textwidth}{| >{\raggedright\arraybackslash}p{0.12\textwidth} 
                                         | >{\centering\arraybackslash}p{0.09\textwidth} 
                                         | >{\centering\arraybackslash}p{0.09\textwidth} 
                                         | >{\centering\arraybackslash}p{0.09\textwidth} |}
        ine
        & Ours & GLMC & G-DOOM \\
        ine
        Task A error (cm)  & \textbf{1.3 $\pm$ 0.4 }  & 1.7 $\pm$ 0.3   &  1.8 $\pm$ 0.4   \\
        ine
        Task A Time (s)& \textbf{22 $\pm$ 4 }  & 28 $\pm$ 6  &  30 $\pm$ 5  \\
        ine
        Task A rate  & \textbf{100\% }  & 95\%  &  90 \% \\
        ine
        Task B error (cm)  & \textbf{1.9 $\pm$ 0.4 }  & 2.3 $\pm$ 0.4   &  2.2 $\pm$ 0.2   \\
        ine
        Task B Time (s)& \textbf{65 $\pm$ 4 }  & 76 $\pm$ 4  &  79 $\pm$ 5  \\
        ine
        Task B rate  & \textbf{ 85\% }  & 75\%  &  70\%  \\
        ine
        Task C error (cm)  & \textbf{1.5 $\pm$ 0.3 }  & 2.1 $\pm$ 0.4   &  2.5 $\pm$ 0.4   \\
        ine
        Task C Time (s)& \textbf{64 $\pm$ 4 }  & 75 $\pm$ 5  &  77 $\pm$ 6  \\
        ine
        Task C rate  & \textbf{85 \%}  & 70 \%  &  55\%  \\
        ine
    \end{tabular*}
\end{table}

\begin{figure}[t]\centering
	\includegraphics[width=\linewidth]{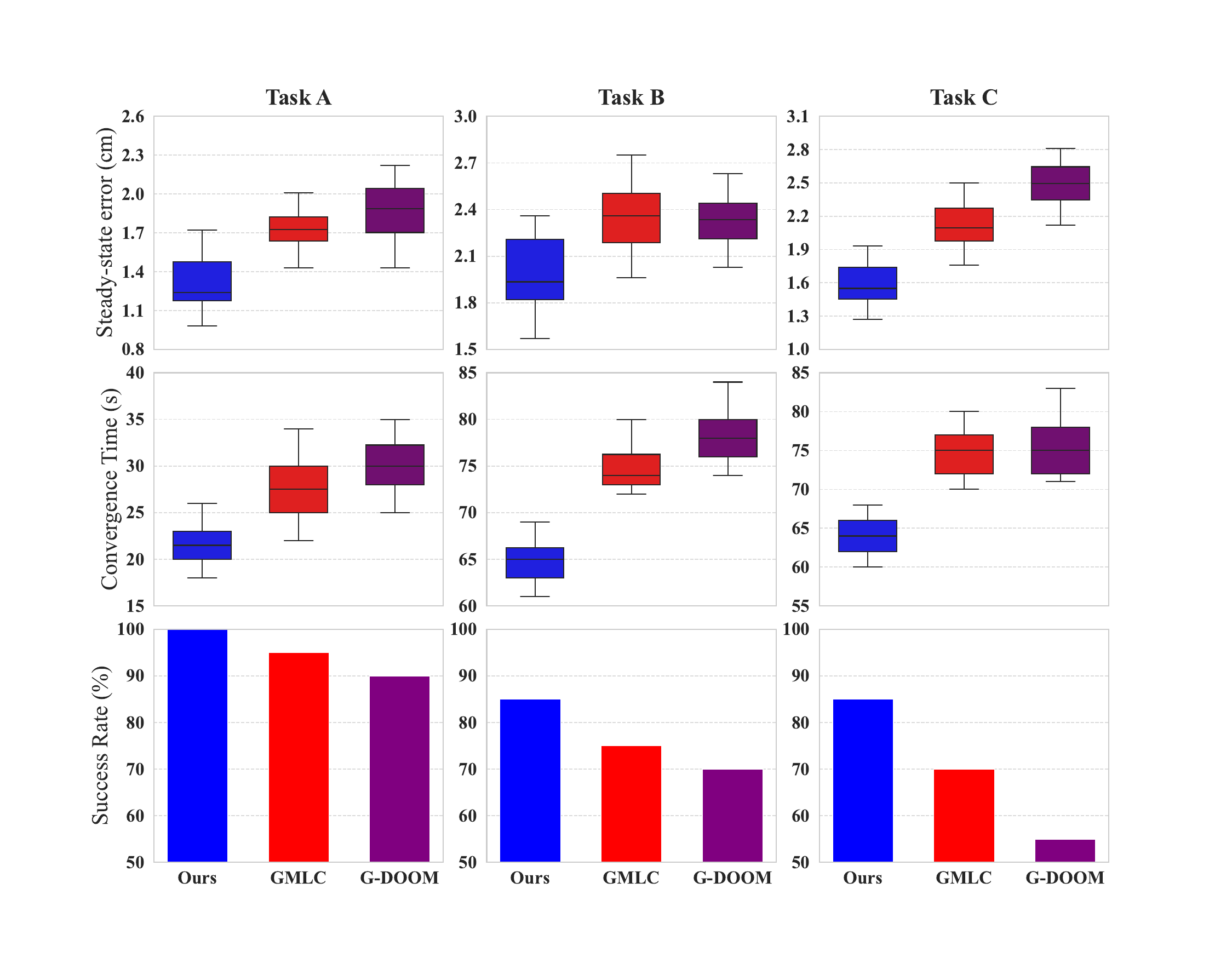}
	\caption{ Comparison of our proposed shape control model against other two methods \cite{9888782} and \cite{9811597}, the three rows from top to bottom represent the steady-state errors, convergence time, and success rates respectively, and the three columns represent three types of tasks. Compared with the other two methods, our proposed method performs better in all three indicators, showing that the proposed method has better steady-state, transient performance and robustness.}
\label{compare}
\end{figure}

\section{Conclusion}
This paper introduces a prescribed performance control method for the manipulation of deformable objects in a latent space that encapsulates spatial information. The proposed approach first extracts the coordinates of key points from the point cloud of the deformable object and represents them as feature vectors using the Key-Grid neural network. By leveraging this representation, which effectively preserves the spatial structure of the object while reducing the dimensionality of the feature space, a prescribed performance controller is designed to perform the manipulation process. The controller ensures that the errors of the key points converge within a predefined performance boundary, thereby improving precise control and enhanced performance. 

The efficacy of the proposed method is rigorously validated through three sets of comparative experiments. The experimental results consistently demonstrate that the proposed method achieves superior steady-state and transient performance when compared to two state-of-the-art approaches. Furthermore, the method exhibits a significantly higher robustness under diverse manipulation scenarios, further underscoring its potential for practical applications in deformable object manipulation tasks. {However, the main limitation of this method is that it is not robust to occlusion. Future work will focus on solving the occlusion problem with this method by combining deep learning methods such as temporal neural networks and generative adversarial networks.}

{\appendix
{1) Detailed process for equation (\ref{dv1}):

Substituting (\ref{xiat}) and (\ref{xibt}) into (\ref{shangshi1}), then one has}
\begin{equation}
\begin{aligned}
    \mathcolorbox{white}{\dot{V}_1(t)} & \mathcolorbox{white}{= \sum^{3n}_{i=1}\bigg[\frac{S_i\xi_{bi}}{(1-\xi^2_{bi})}\frac{1}{\varphi_{bi}}\bigg(\dot{e}_i - \frac{e_i\dot{\varphi}_{bi}}{\varphi_{bi}}\bigg) \bigg]}\\
    & \mathcolorbox{white}{+ \sum^{3n}_{i=1}\bigg[\frac{(1-S_i)\xi_{ai}}{(1-\xi^2_{ai})}\frac{1}{\varphi_{bi}}\bigg(\dot{e}_i - \frac{e_i\dot{\varphi}_{ai}}{\varphi_{ai}}\bigg) \bigg]} \nonumber
\end{aligned}
\end{equation}
 {Due to} $\mathcolorbox{white}{\xi_{ai} = \frac{e_i}{\varphi_{ai}}}$, $\mathcolorbox{white}{\xi_{bi} = \frac{e_i}{\varphi_{bi}}}$, one has
\begin{equation}
\label{m1}
\begin{aligned}
    \mathcolorbox{white}{\dot{V}_1(t)} &\mathcolorbox{white}{= \sum^{3n}_{i=1}\bigg[\frac{S_i\xi^2_{bi}}{(1-\xi^2_{bi})e_i}\bigg(\dot{e}_i - \frac{e_i\dot{\varphi}_{bi}}{\varphi_{bi}}\bigg) \bigg]}\\ 
    &\mathcolorbox{white}{+ \sum^{3n}_{i=1}\bigg[\frac{(1-S_i)\xi^2_{ai}}{(1-\xi^2_{ai})e_i}\bigg(\dot{e}_i - \frac{e_i\dot{\varphi}_{ai}}{\varphi_{ai}}\bigg) \bigg]}. \nonumber
\end{aligned}
\end{equation}
{Then, introducing (\ref{xi}) into the above equation, one has}
\begin{equation}
\begin{aligned}
    \mathcolorbox{white}{\dot{V}_1(t) = \sum^{3n}_{i=1}\bigg[\frac{\xi_i^2 \dot{e}_i}{(1-\xi_i^2)e_i} - \frac{S_i\xi^2_{bi}}{1-\xi^2_{bi}}\frac{\dot{\varphi}_{bi}}{\varphi_{bi}} - \frac{(1-S_i)\xi^2_{ai}}{1-\xi^2_{ai}} \frac{\dot{\varphi}_{ai}}{\varphi_{ai}}\bigg].} \nonumber
\end{aligned}
\end{equation}

{2) Detailed process for equation (\ref{dv13}):

According to (\ref{xi}) and (\ref{z}), it can be concluded that for} $\mathcolorbox{white}{S_i = 1}$, $\mathcolorbox{white}{\eta_iz_ie_i = \frac{\eta_i\xi_{b_i}^2}{1-\xi_{bi}^2}}$, {then one has}
\begin{equation}
    \mathcolorbox{white}{-\eta_iz_ie_i-\frac{\xi_{bi}^2}{1-\xi_{bi}^2}\frac{\dot{\varphi}_{bi}}{\varphi_{bi}} = -\frac{\xi_{bi}^2}{1-\xi_{bi}^2}(\eta_i+\frac{\dot{\varphi}_{bi}}{\varphi_{bi}}) \leq 0}. \nonumber
\end{equation}
{Similarly, for} $\mathcolorbox{white}{S_i = 0}$, {we have}
\begin{equation}
    \mathcolorbox{white}{-\eta_iz_ie_i-\frac{\xi_{ai}^2}{1-\xi_{ai}^2}\frac{\dot{\varphi}_{ai}}{\varphi_{ai}} = -\frac{\xi_{ai}^2}{1-\xi_{ai}^2}(\eta_i+\frac{\dot{\varphi}_{ai}}{\varphi_{ai}}) \leq 0} \nonumber.
\end{equation}
{As such, it can be concluded that}
\begin{equation}
    \mathcolorbox{white}{-\mathbf{z}^T\boldsymbol\eta\mathbf e_p- \sum^{3n}_{i=1} \bigg[\frac{S_i\xi^2_{bi}}{1-\xi^2_{bi}}\frac{\dot{\varphi}_{bi}}{\varphi_{bi}} +\frac{(1-S_i)\xi^2_{ai}}{1-\xi^2_{ai}} \frac{\dot{\varphi}_{ai}}{\varphi_{ai}}\bigg] \leq 0}. \nonumber
\end{equation}
{Therefore, by substituting the above formula into (\ref{eq29}), we can obtain (\ref{dv13}).}
}

\bibliographystyle{IEEEtran}
\bibliography{reference}\ 

\vfill

\end{document}